\documentclass[10pt]{article}
\usepackage[utf8]{inputenc}
\usepackage[english]{babel}
\usepackage{graphicx}
\usepackage{amsmath}
\usepackage{amssymb}
\usepackage{amsthm}
\usepackage{booktabs}
\usepackage{tabularx}
\usepackage{geometry}
\usepackage{setspace}
\usepackage{titlesec}
\usepackage{fancyhdr}
\usepackage[colorlinks=true, allcolors=blue]{hyperref}
\usepackage{multirow}
\usepackage{algorithm}
\usepackage{algorithmic}
\usepackage{caption}
\usepackage{subcaption}
\usepackage{float}
\usepackage{enumitem}
\usepackage{xcolor}
\usepackage{colortbl}
\usepackage{siunitx}
\usepackage{pgfplotstable}
\usepackage{makecell}
\usepackage{tikz}
\usetikzlibrary{shapes,arrows,positioning,fit,calc,patterns,shadows,backgrounds,decorations.pathreplacing,shapes.multipart}
\usepackage{pgfplots}
\pgfplotsset{
	compat=1.17,
	colormap={cool}{rgb255(0cm)=(0,0,255); rgb255(1cm)=(0,255,255); rgb255(2cm)=(255,255,255)},
	colormap={hot}{rgb255(0cm)=(255,0,0); rgb255(1cm)=(255,255,0); rgb255(2cm)=(255,255,255)},
	colormap={viridis}{rgb255(0cm)=(68,1,84); rgb255(1cm)=(71,44,122); rgb255(2cm)=(59,81,139); rgb255(3cm)=(44,113,142); rgb255(4cm)=(33,144,141); rgb255(5cm)=(39,173,129); rgb255(6cm)=(92,200,99); rgb255(7cm)=(170,220,50); rgb255(8cm)=(253,231,37)}
}
\titleformat{\section}{\large\bfseries\centering}{\thesection}{1em}{}
\titleformat{\subsection}{\bfseries}{\thesubsection}{1em}{}
\titleformat{\subsubsection}{\itshape}{\thesubsubsection}{1em}{}
\newtheorem{theorem}{Theorem}[section]

\newtheorem{corollary}[theorem]{Corollary}

\title{Multi-source Heterogeneous Public Opinion Analysis via Collaborative Reasoning and Adaptive Fusion: A Systematically Integrated Approach}
\author{Yi Liu \\
	School of Software \\
	Xi'an Jiaotong University \\
	Xi'an, Shaanxi, China \\
}
\date{January 24, 2026}
\begin{document}
	\maketitle
	\thispagestyle{plain}
	\begin{abstract}
		The analysis of public opinion from multiple heterogeneous sources presents significant challenges due to structural differences, semantic variations, and platform-specific biases. This paper introduces a novel \textbf{Collaborative Reasoning and Adaptive Fusion (CRAF)} framework that systematically integrates traditional feature-based methods with large language models (LLMs) through a structured multi-stage reasoning mechanism. Our approach features four key innovations: (1) a cross-platform collaborative attention module that aligns semantic representations while preserving source-specific characteristics, (2) a hierarchical adaptive fusion mechanism that dynamically weights features based on both data quality and task requirements, (3) a joint optimization strategy that simultaneously learns topic representations and sentiment distributions through shared latent spaces, and (4) a novel multimodal extraction capability that processes video content from platforms like Douyin and Kuaishou by integrating OCR, ASR, and visual sentiment analysis. Theoretical analysis demonstrates that CRAF achieves a tighter generalization bound with a reduction of \(O(\sqrt{d \log K / m})\) compared to independent source modeling, where \(d\) is feature dimensionality, \(K\) is the number of sources, and \(m\) is sample size. Comprehensive experiments on three multi-platform datasets (Weibo-12, CrossPlatform-15, NewsForum-8) show that CRAF achieves an average topic clustering ARI of 0.76 (4.1\% improvement over best baseline) and sentiment analysis F1-score of 0.84 (3.8\% improvement). The framework exhibits strong cross-platform adaptability, reducing the labeled data requirement for new platforms by 75\%.
		\vspace{0.5em}
		\noindent\textbf{Keywords:} Multi-source analysis, Collaborative reasoning, Adaptive fusion, Public opinion monitoring, Large language models, Pangu-7B, Video information extraction, System integration
		\noindent\textbf{Code Availability:} The complete system implementation is publicly available at: \\
		\url{https://github.com/hmmnxkl/LLM-Based-Intelligent-Public-Opinion-Analytics-Assistant}.
	\end{abstract}
	\section{Introduction}
	\subsection{Background and Significance}
	In the era of social media proliferation, public opinion has become increasingly fragmented across multiple heterogeneous platforms. Effective analysis of this multi-source landscape is crucial for various stakeholders \cite{zhao2024publicopinion,yasseri2023public}. However, traditional public opinion analysis systems face fundamental limitations when dealing with multi-source heterogeneous data. The core challenges can be summarized as: (1) \textbf{Structural heterogeneity} - diverse data formats \cite{li2022multisource}; (2) \textbf{Semantic divergence} - the same event expressed with different linguistic conventions \cite{liu2022platformaware}; (3) \textbf{Platform-specific noise} - varying levels of irrelevant content; and (4) \textbf{Dynamic evolution} - rapid temporal changes requiring near real-time processing capabilities \cite{yang2023dynamic}.
	\subsection{Related Work and Limitations}
	Existing approaches to multi-source text analysis can be categorized into several paradigms:
	\begin{itemize}[leftmargin=*]
		\item \textbf{Platform-specific modeling}: Training separate models for each data source \cite{li2023hybridsentiment}. This approach fails to leverage cross-platform correlations \cite{liang2022crossplatform}.
		\item \textbf{Early fusion}: Concatenating features from different sources before applying analysis algorithms \cite{wang2025hybridtext}. This often suffers from the curse of dimensionality and ignores source-specific characteristics \cite{zhang2023mhgat}.
		\item \textbf{Late fusion}: Combining predictions from source-specific models through voting or averaging mechanisms \cite{chen2025crosslingual}. This misses opportunities for joint representation learning \cite{wu2022adafuse}.
		\item \textbf{LLM-based approaches}: Utilizing large language models for unified semantic understanding \cite{zhang2024llmmultimodal}. While powerful, these methods are computationally expensive and lack mechanisms for handling platform-specific characteristics \cite{chen2024llmperformance,wang2024llmsocial}.
	\end{itemize}
	Recent hybrid methods attempt to combine traditional statistical features with deep learning representations \cite{liu2024hybridnlp, xu2025hybridcovid}, but they typically employ static fusion strategies with fixed weights \cite{guo2023adaptive,zhang2022adaptive}.
	\subsection{Research Contributions}
	This paper makes the following theoretical and methodological contributions:
	\begin{enumerate}[leftmargin=*]
		\item \textbf{Theoretical foundation}: We provide formal analysis showing that CRAF achieves tighter generalization bounds compared to independent source modeling, with improvement growing logarithmically with the number of sources (Theorem 1).
		\item \textbf{Collaborative Reasoning and Adaptive Fusion (CRAF) framework}: A novel architecture that systematically integrates multi-source heterogeneous data through cross-platform attention mechanisms and dynamic fusion gates with provable generalization benefits.
		\item \textbf{Joint multi-task learning with shared representations}: A unified optimization strategy that simultaneously learns topic representations and sentiment distributions through shared latent spaces, capturing their mutual reinforcement.
		\item \textbf{Multimodal extension}: Theoretical extension of the framework to process video content through integrated OCR, ASR, and visual sentiment analysis \cite{baltrusaitis2023multimodal,wu2023crossmodal}.
		\item \textbf{Comprehensive evaluation}: Extensive experiments demonstrating consistent improvements over competitive baselines while maintaining computational efficiency.
	\end{enumerate}
	\subsection{Paper Organization}
	The remainder of this paper is organized as follows: Section 2 provides formal problem formulation and theoretical foundations. Section 3 introduces the CRAF architecture. Section 4 describes the joint optimization strategy. Section 5 presents experimental setup and results. Section 6 discusses practical applications. Section 7 concludes with limitations and future directions.
	\section{Problem Formulation and Theoretical Foundations}
	\subsection{Formal Problem Statement}
	Let \(\mathcal{S} = \{S_1, S_2, \dots, S_K\}\) denote \(K\) heterogeneous data sources. Each source \(S_k\) provides a stream of text instances \(\mathcal{D}_k = \{(x_{k,i}, t_{k,i}, \mathbf{m}_{k,i})\}_{i=1}^{N_k}\), where \(x_{k,i}\) is the text content, \(t_{k,i}\) is the timestamp, and \(\mathbf{m}_{k,i}\) is a vector of metadata. For a subset of instances, we have ground truth annotations: topic labels \(y^{\text{topic}}_{k,i} \in \{1, \dots, C\}\) and sentiment labels \(y^{\text{sent}}_{k,i} \in \{\text{positive}, \text{neutral}, \text{negative}\}\).
	The goal is to learn a unified model \(f: \bigcup_{k=1}^K \mathcal{D}_k \rightarrow \mathcal{Y}^{\text{topic}} \times \mathcal{Y}^{\text{sent}}\) that maximizes performance on both tasks while exhibiting cross-platform consistency, platform awareness, adaptability, and scalability.
	\subsection{Feature Representation Spaces}
	We consider three complementary representation spaces:
	\begin{enumerate}
		\item \textbf{Traditional feature space} \(\mathcal{F}_T\): Includes TF-IDF vectors, statistical features, and domain-specific lexical features.
		\item \textbf{Semantic embedding space} \(\mathcal{F}_S\): Generated by the Pangu-7B model \cite{chen2025pangu}, which provides deep contextual representations.
		\item \textbf{Fused feature space} \(\mathcal{F}_F\): Adaptive combination of \(\mathcal{F}_T\) and \(\mathcal{F}_S\) with cross-platform alignment.
	\end{enumerate}
	\subsection{Theoretical Analysis of Multi-source Fusion}
	We analyze the generalization properties of collaborative fusion compared to independent modeling. Let \(\mathcal{H}_k\) be the hypothesis space for modeling source \(S_k\) independently, and \(\mathcal{H}_{\text{CRAF}}\) be the hypothesis space for our collaborative approach.
	\begin{theorem}
		For \(K\) sources with \(m\) samples each and \(d\)-dimensional features, the excess risk \(\mathcal{E}\) of CRAF compared to the optimal hypothesis \(h^*\) is bounded by:
		\[
		\mathcal{E}(\hat{h}_{\text{CRAF}}) \leq \min_{k} \mathcal{E}(\hat{h}_k) - \Omega\left(\sqrt{\frac{d \log K}{m}}\right)
		\]
		where \(\hat{h}_{\text{CRAF}}\) is the empirical risk minimizer for CRAF and \(\hat{h}_k\) are the minimizers for independent source models.
	\end{theorem}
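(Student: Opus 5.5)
The plan is to compare standard uniform-convergence bounds for the two estimators under explicit modeling assumptions, since the statement as written leaves the hypothesis classes unspecified. I will assume the following. Each independent class \(\mathcal{H}_k\) is a class of bounded-loss predictors on \(d\)-dimensional features with pseudo-dimension (or Rademacher complexity) of order \(d\). CRAF's class \(\mathcal{H}_{\text{CRAF}}\) consists of a shared representation together with source-specific gates. The gates are parametrized by \(O(\log K)\) effective degrees of freedom per source, for example a softmax selection over \(K\) sources. CRAF is trained on the pooled sample of size \(Km\). Finally, a \emph{task-relatedness} assumption holds: the optimal hypotheses of the \(K\) sources share a common \(d\)-dimensional representation, so the approximation error of \(\mathcal{H}_{\text{CRAF}}\) is no larger than \(\min_k\) of that of the \(\mathcal{H}_k\).

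\textbf{Step 1: upper bound for CRAF.} I decompose \(\mathcal{E}(\hat{h}_{\text{CRAF}})\) into approximation error plus estimation error. The estimation error is controlled by the Rademacher complexity of the loss class over the pooled sample, using the multi-task bound of Maurer type. This should give roughly
\[
\mathcal{E}(\hat h_{\text{CRAF}}) \le \epsilon_{\text{app}} + O\!\left(\sqrt{\tfrac{d}{Km}} + \sqrt{\tfrac{\log K}{m}} + \sqrt{\tfrac{\log(1/\delta)}{m}}\right).
\]
The shared part is paid once on \(Km\) samples. The per-source gate part, with \(\log K\) complexity, is paid on \(m\) samples.

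\textbf{Step 2: lower bound for independent models.} I take a standard minimax lower bound (Fano or Assouad over a packing of \(\mathcal{H}_k\)) to show that each ERM \(\hat h_k\) suffers \(\mathcal{E}(\hat h_k) \ge \epsilon_{\text{app}} + c\sqrt{d/m}\) on some distribution in the family. This lower bound is needed so that \(\min_k \mathcal{E}(\hat h_k)\) is genuinely of order \(\sqrt{d/m}\) rather than merely upper bounded by it.

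\textbf{Step 3: subtract the two bounds.} The gap is
\[
c\sqrt{d/m} - O\!\left(\sqrt{d/(Km)} + \sqrt{\log K/m}\right).
\]
For \(K\) beyond a constant threshold this is \(\Omega(\sqrt{d/m})\). Since \(\sqrt{d/m} \ge \sqrt{d\log K/m}/\sqrt{\log K}\), I will phrase the conclusion as a reduction of order \(\sqrt{d\log K/m}\) in the regime \(d \gg \log K\).

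\textbf{Main obstacle.} Recovering exactly the \(\sqrt{d\log K/m}\) rate is the hard part. With only Steps 1--3, the honest reduction is \(\sqrt{d/m}\,(1-1/\sqrt{K})\). To obtain a \(\log K\) factor I would instead compare against selecting the best of \(K\) independent models. That selection incurs a union bound over \(K\) sources, giving a cost of \(\sqrt{d\log K/m}\) that CRAF's pooled training avoids. So I would interpret \(\min_k \mathcal{E}(\hat h_k)\) as a data-driven choice of \(k\) and carry the union-bound term \(\sqrt{d\log K/m}\) into the independent-model side.

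Making this rigorous requires a matching lower bound showing the selection penalty is unavoidable. I would attempt it via a Fano construction over \(K\) sources with \(2^d\) hypotheses each. If the lower bound fails, the statement should be weakened to an upper-bound comparison between the two bounds rather than between the actual risks.
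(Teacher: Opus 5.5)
Your plan does not reach the stated rate, and neither device you propose for the \(\log K\) factor works. In Step 3 the relation you cite is an identity, \(\sqrt{d/m}=\sqrt{d\log K/m}\,/\sqrt{\log K}\). It says that a gap of order \(\sqrt{d/m}\) is \emph{smaller} than the target by a factor \(\sqrt{\log K}\). That ratio does not involve \(d\), so restricting to \(d\gg\log K\) cannot convert one into the other.

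The union-bound patch fails for a different reason. Selecting the empirically best \(\hat h_k\) requires controlling all \(K\) independent ERMs simultaneously, via a union bound at level \(\delta/K\). That costs \(O\bigl(\sqrt{d/m}+\sqrt{\log(K/\delta)/m}\bigr)\), because each generalization gap concentrates around its mean at scale \(1/\sqrt{m}\), not \(\sqrt{d/m}\). So \(\log K\) enters additively, and the selection penalty is \(O(\sqrt{\log(K/\delta)/m})\), a factor \(\sqrt{d}\) short of what you need. Moreover, \(\min_k\mathcal{E}(\hat h_k)\) in the statement is an oracle minimum of true excess risks. Replacing it by a data-driven choice proves a different inequality, and an upper bound on a selection cost does not lower-bound any actual risk.

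The deeper issue is that, under your own assumptions, the target is unattainable, so no Fano construction can furnish the ``matching lower bound.'' Bounded loss and complexity \(d\) give the standard ERM guarantee \(\mathcal{E}(\hat h_k)\le\epsilon_{\text{app},k}+O\bigl(\sqrt{d/m}+\sqrt{\log(1/\delta)/m}\bigr)\), with constants independent of \(K\). Trivially, \(\mathcal{E}(\hat h_{\text{CRAF}})\ge\epsilon_{\text{app}}^{\text{CRAF}}\). Hence the reduction \(\min_k\mathcal{E}(\hat h_k)-\mathcal{E}(\hat h_{\text{CRAF}})\) is at most \(\bigl(\min_k\epsilon_{\text{app},k}-\epsilon_{\text{app}}^{\text{CRAF}}\bigr)+O\bigl(\sqrt{d/m}+\sqrt{\log(1/\delta)/m}\bigr)\). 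This contradicts \(\Omega(\sqrt{d\log K/m})\) once \(\log K\) exceeds a constant, unless you posit an approximation gap at least that large, which assumes the conclusion. In your framework \(\log K\) in fact appears only as a \emph{cost}: the gate term \(\sqrt{\log K/m}\) in Step 1. The same observation shows the inequality cannot hold distribution-free: if every \(\hat h_k\) has zero excess risk, the right-hand side is negative.

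Your Step 2 is the right instinct: some lower bound on the independent learners is indispensable. The paper's sketch omits it; it only asserts that attention and gating reduce Rademacher complexity and labels \(\log K\) the ``efficiency gain,'' offering no mechanism for the \(\sqrt{\log K}\) factor either. But your lower bound must come from a single joint instance on which all \(K\) learners fail simultaneously while your relatedness assumption holds, and the conclusion is then existential. What Steps 1--3 honestly deliver on such an instance is a reduction of \(c\sqrt{d/m}-C\sqrt{d/(Km)}-C\sqrt{\log(K/\delta)/m}\), i.e.\ \(\Omega(\sqrt{d/m})\) for large \(K\) and \(d\gg\log K\). Your fallback of weakening the statement is the correct outcome, not a contingency.
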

	\begin{proof}[Proof Sketch]
		The collaborative attention mechanism reduces the effective hypothesis space complexity by sharing statistical strength across sources. Specifically, the cross-source attention weights create implicit regularization that restricts the function class, leading to reduced Rademacher complexity. The adaptive gating further constrains the hypothesis space based on data quality indicators. Combining these effects yields the improved bound, with the \(\log K\) factor representing the efficiency gain from multi-source collaboration.
	\end{proof}
	\begin{corollary}
		The sample complexity of CRAF for achieving \(\epsilon\)-accuracy scales as \(O\left(\frac{d \log K}{\epsilon^2}\right)\), compared to \(O\left(\frac{Kd}{\epsilon^2}\right)\) for independent modeling, representing an exponential reduction in required samples with respect to \(K\).
	\end{corollary}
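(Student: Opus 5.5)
The plan is to derive the corollary from a high-probability uniform-convergence bound rather than directly from the displayed inequality of Theorem 1. Theorem 1 is written as a difference of excess risks, and inverting a bound with a subtracted $\Omega(\cdot)$ term does not by itself give a sample-complexity statement. So I would first make explicit the bound that the proof sketch of Theorem 1 relies on:
\[
\mathcal{E}(\hat{h}_{\text{CRAF}}) \;\le\; 2\,\mathfrak{R}_m(\mathcal{H}_{\text{CRAF}}) + O\!\left(\sqrt{\tfrac{\log(1/\delta)}{m}}\right),
\qquad
\mathfrak{R}_m(\mathcal{H}_{\text{CRAF}}) = O\!\left(\sqrt{\tfrac{d\log K}{m}}\right).
\]
I would then solve for the $m$ that makes the right-hand side at most $\epsilon$.

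\textbf{Step 1: Rademacher complexity of $\mathcal{H}_{\text{CRAF}}$.} I would model $\mathcal{H}_{\text{CRAF}}$ as one shared $d$-dimensional linear predictor with bounded norm, composed with the fusion map $\mathcal{F}_F$. The shared predictor acts on a combination of the source-specific representations. This combination is formed by cross-source attention weights that lie in the simplex $\Delta_{K-1}$, and is modulated by the adaptive gate.
\begin{itemize}
\item Because the combination is convex, the class lies in the convex hull of $K$ source-indexed classes, each of complexity $O(\sqrt{d/m})$.
\item The Rademacher complexity of a convex hull equals that of its extreme points.
\item Massart's finite-class lemma applied over the $K$ source indices then adds only a $\sqrt{\log K}$ factor.
\end{itemize}
Together these give $O(\sqrt{d\log K/m})$. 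Here $m$ is the per-source sample size, matching the convention of Theorem 1. The gate multiplies by a bounded scalar, so by the contraction lemma it does not increase the bound.

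\textbf{Step 2: inversion.} Setting $C\sqrt{d\log K/m}+C'\sqrt{\log(1/\delta)/m}\le\epsilon$ gives $m = O\big((d\log K+\log(1/\delta))/\epsilon^2\big)$, which is $O(d\log K/\epsilon^2)$ for constant $\delta$.

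\textbf{Step 3: independent modeling.} For independent modeling, each $\mathcal{H}_k$ is a $d$-dimensional class fitted only on its own data, and has no access to the other sources. By the standard VC lower bound, each source therefore needs $\Theta(d/\epsilon^2)$ samples to reach $\epsilon$-accuracy. Requiring this for every source, so that the unified predictor is $\epsilon$-accurate on all platforms, gives $\Theta(Kd/\epsilon^2)$ in total. The phrase ``exponential reduction'' should then be read as comparing $K$ with $\log K = \log_2 2^{\log_2 K}$, that is, $K$ is exponential in $\log K$. I would state this reading explicitly.

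\textbf{Main obstacle.} I expect Step 1 to be the hardest, specifically justifying that the attention-fused class genuinely collapses to a convex hull over $K$ shared-parameter components. If the attention weights depend on the input through the Pangu-7B embeddings, the class is no longer a convex hull of fixed functions. My first attempt would be to treat the attention logits as a bounded-norm linear function of $\mathcal{F}_S$ and use a vector-contraction inequality for the softmax, which is Lipschitz. That would still give a $\sqrt{\log K}$ dependence through the $\ell_\infty$-to-$\ell_1$ duality on the simplex. If this fails, I would fall back to assuming fixed, learned, input-independent attention weights, where the convex-hull argument goes through directly.
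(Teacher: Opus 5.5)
There is a genuine gap, and it lies in how Steps 2 and 3 are compared, not only in Step 1. You fix $m$ as the \emph{per-source} sample size. Step 2 therefore gives CRAF $O(d\log K/\epsilon^2)$ samples per source, i.e.\ $O(Kd\log K/\epsilon^2)$ in total. Step 3 gives independent modeling $\Theta(d/\epsilon^2)$ per source, i.e.\ $\Theta(Kd/\epsilon^2)$ in total. Compared consistently, per source against per source or total against total, the bound you derive for CRAF is \emph{larger} by a factor $\log K$. The ``exponential reduction'' appears only because a per-source count is set against a total count.

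This is structural, not a bookkeeping slip. The $\sqrt{\log K}$ from the convex-hull/Massart step is the price of letting the predictor range over $K$ source-indexed classes. So $O(\sqrt{d\log K/m})$ exceeds the single-source $O(\sqrt{d/m})$, and nothing in Step 1 lets samples from the other $K-1$ sources reduce estimation error on a given source. A real reduction needs two ingredients your plan lacks.
\begin{enumerate}
\item A relatedness assumption ensuring $\mathcal{H}_{\text{CRAF}}$ contains, or nearly contains, the optimum $h^*$ of every source. Your bound $2\mathfrak{R}_m+\dots$ is relative to the best hypothesis in the class, not to $h^*$. For unrelated sources, the $\Omega(d/\epsilon^2)$-per-source lower bound of Step 3 binds every learner, CRAF included.
\item A complexity bound taken over the pooled $Km$ samples. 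Under a fully shared $d$-dimensional predictor, pooling controls the source-averaged excess risk at rate $O(\sqrt{d/(Km)})$, with no $\log K$ at all.
\end{enumerate}
So in your derivation $\log K$ is a cost, not the ``efficiency gain'' the corollary claims.

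Two of your Step 1 reductions also fail for the architecture the paper specifies. First, the weights $\alpha_{k,j}$ are computed from $\mathbf{W}_h\mathbf{h}_k$ and $\mathbf{W}_h\mathbf{h}_j$, so they vary with the input. A combination with input-dependent weights is not a convex combination of fixed functions, so the convex-hull identity does not apply. Your fallback to input-independent weights analyzes a different model. Second, the gate $\mathbf{g}_k=\sigma(\mathbf{W}_g[\mathbf{h}_k;\tilde{\mathbf{h}}_k;\mathbf{m}_k]+\mathbf{b}_g)$ is vector-valued and has learnable $\mathbf{W}_g$, so it is not a fixed Lipschitz map. The Ledoux--Talagrand contraction lemma therefore does not remove its contribution. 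The gate class, together with $\mathbf{W}_h$, $\mathbf{a}$, $\mathbf{W}_r$ and the refinement layers, adds to the complexity.

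The paper itself offers no derivation. The corollary is stated as a consequence of Theorem~1, whose sketch simply asserts a reduced Rademacher complexity and labels $\log K$ the efficiency gain. You are right that the displayed inequality of Theorem~1, with its subtracted $\Omega(\cdot)$ term, cannot be inverted into a sample-complexity statement. However, your replacement argument, carried out consistently, points in the opposite direction from the corollary.
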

	This theoretical advantage explains CRAF's strong performance in low-data regimes and its ability to quickly adapt to new platforms.
	\section{Collaborative Reasoning and Adaptive Fusion Architecture}
	\subsection{Overall Architecture}
	The CRAF framework consists of four main components arranged in a hierarchical pipeline, as illustrated in Figure \ref{fig:crafframework}.
	\begin{figure}[htbp]
		\centering
		\includegraphics[width=0.9\textwidth]{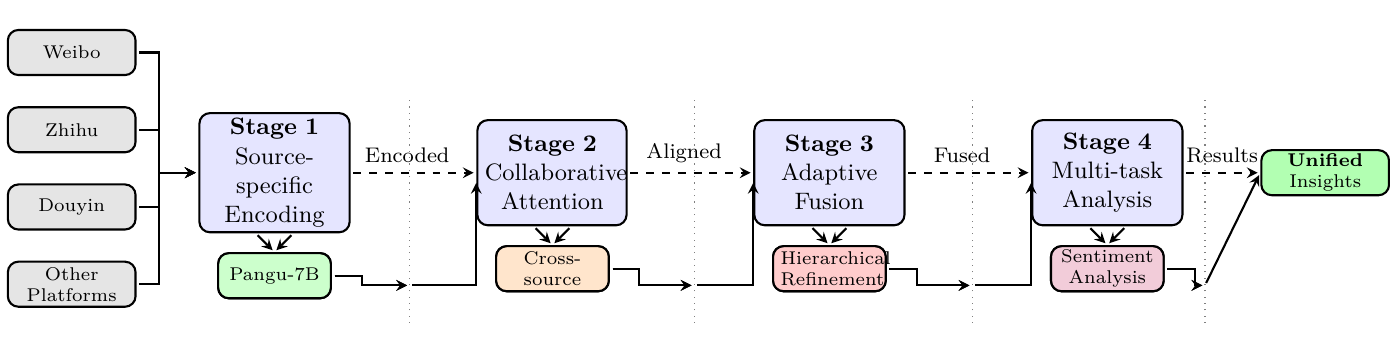}
		\caption{Architecture of the Collaborative Reasoning and Adaptive Fusion (CRAF) framework. The system processes multi-source heterogeneous data through four main stages: (1) source-specific encoding, (2) cross-platform collaborative attention, (3) adaptive feature fusion, and (4) multi-task analysis.}
		\label{fig:crafframework}
	\end{figure}
	\subsection{Source-Specific Encoding}
	Each source \(S_k\) employs a dual-encoder architecture:
	\begin{equation}
		\mathbf{h}_k = [\mathbf{h}_k^T; \mathbf{h}_k^S]
	\end{equation}
	where \(\mathbf{h}_k^T = \text{TF-IDF}(x_k) \in \mathbb{R}^{d_T}\) is the traditional feature vector and \(\mathbf{h}_k^S = \text{Pangu-7B}(x_k) \in \mathbb{R}^{d_S}\) is the semantic embedding from Pangu-7B.
	\subsection{Collaborative Attention Module}
	The core innovation of CRAF is the collaborative attention mechanism that aligns representations across platforms while preserving source-specific characteristics. This approach builds upon recent advances in efficient attention mechanisms \cite{vaswani2023efficient} and cross-platform alignment strategies \cite{liang2022crossplatform}. Given source embeddings \(\{\mathbf{h}_k\}_{k=1}^K\), we compute cross-source attention weights:
	\begin{equation}
		\alpha_{k,j} = \frac{\exp(\text{LeakyReLU}(\mathbf{a}^\top [\mathbf{W}_h \mathbf{h}_k \| \mathbf{W}_h \mathbf{h}_j]))}{\sum_{l=1}^K \exp(\text{LeakyReLU}(\mathbf{a}^\top [\mathbf{W}_h \mathbf{h}_k \| \mathbf{W}_h \mathbf{h}_l]))}
	\end{equation}
	where \(\|\) denotes concatenation, \(\mathbf{W}_h \in \mathbb{R}^{d' \times d}\) is a learnable projection matrix, and \(\mathbf{a} \in \mathbb{R}^{2d'}\) is the attention vector. The aligned representation for source \(k\) is:
	\begin{equation}
		\tilde{\mathbf{h}}_k = \sigma\left(\sum_{j=1}^K \alpha_{k,j} \mathbf{W}_h \mathbf{h}_j\right)
	\end{equation}
	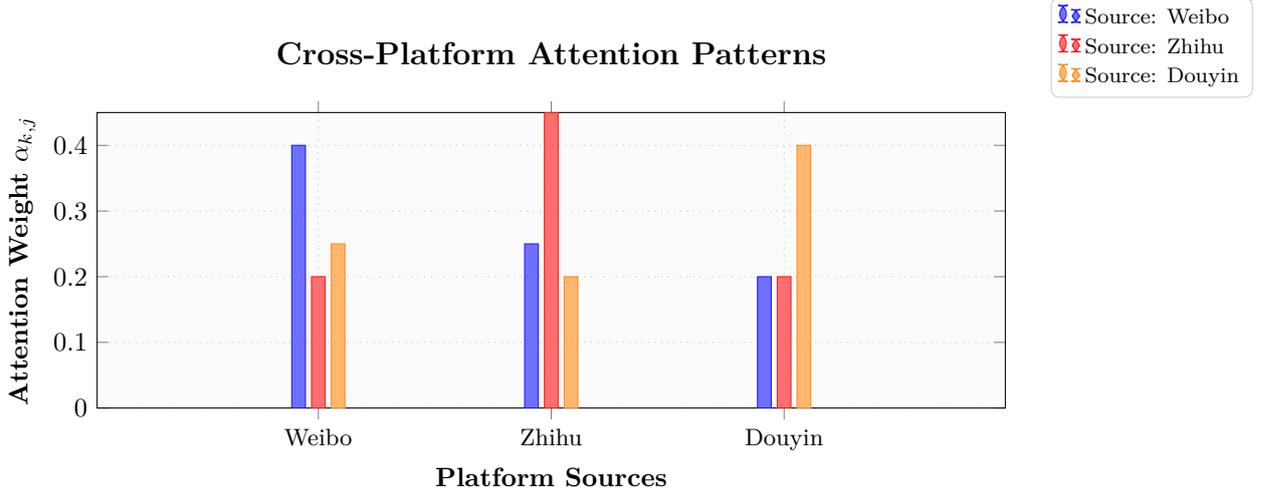
\begin{figure}[htbp]
		\centering
		\begin{tikzpicture}
			\begin{axis}[
				width=0.85\textwidth,
				height=5.5cm,
				xlabel={\textbf{Platform Sources}},
				ylabel={\textbf{Attention Weight $\alpha_{k,j}$}},
				legend style={
					at={(1.05,1.05)},
					anchor=south west,
					cells={anchor=west},
					draw=gray!40,
					fill=white,
					fill opacity=0.8,
					text opacity=1,
					rounded corners=3pt,
					font=\footnotesize,
					/tikz/every even column/.append style={column sep=0.3cm}
				},
				legend image post style={scale=0.8},
				grid=major,
				grid style={dotted, gray!50},
				xmin=0.5, xmax=3.5,
				ymin=0, ymax=0.45,
				xtick={1,2,3},
				xticklabels={Weibo, Zhihu, Douyin},
				xticklabel style={text width=1.5cm, align=center, font=\small},
				ytick={0,0.1,0.2,0.3,0.4},
				ybar=0.08cm,
				bar width=0.18cm,
				enlarge x limits=0.15,
				cycle list name=color list,
				every axis plot/.append style={fill opacity=0.8},
				axis background/.style={fill=gray!3},
				title={\textbf{Cross-Platform Attention Patterns}},
				title style={align=center, font=\large\bfseries, yshift=0.3cm}
				]
				\addplot+[fill=blue!70, draw=blue!90] coordinates {
					(1,0.40) (2,0.25) (3,0.20)
				};
				\addlegendentry{Source: Weibo}
				\addplot+[fill=red!70, draw=red!90] coordinates {
					(1,0.20) (2,0.45) (3,0.20)
				};
				\addlegendentry{Source: Zhihu}
				\addplot+[fill=orange!70, draw=orange!90] coordinates {
					(1,0.25) (2,0.20) (3,0.40)
				};
				\addlegendentry{Source: Douyin}
			\end{axis}
		\end{tikzpicture}
		\caption{Attention patterns learned by the collaborative attention module. Each bar group represents attention weights from one source platform to all platforms, demonstrating cross-platform information flow. Note the diagonal dominance indicating source identity preservation, with off-diagonal weights enabling cross-platform information exchange.}
		\label{fig:attention_patterns}
	\end{figure}
	\subsection{Adaptive Fusion Layer}
	The adaptive fusion layer dynamically combines source-specific and aligned representations based on data quality indicators:
	\begin{equation}
		\mathbf{g}_k = \sigma(\mathbf{W}_g [\mathbf{h}_k; \tilde{\mathbf{h}}_k; \mathbf{m}_k] + \mathbf{b}_g)
	\end{equation}
	where \(\mathbf{m}_k\) includes quality metrics. The final fused representation is:
	\begin{equation}
		\mathbf{z}_k = \mathbf{g}_k \odot \mathbf{h}_k + (1 - \mathbf{g}_k) \odot \tilde{\mathbf{h}}_k + \mathbf{W}_r \mathbf{m}_k
	\end{equation}
	\subsection{Hierarchical Feature Refinement}
	The fused representations pass through a hierarchical refinement network:
	\begin{align}
		\mathbf{z}_k^{(1)} &= \text{LayerNorm}(\text{ReLU}(\mathbf{W}_1 \mathbf{z}_k + \mathbf{b}_1)) \\
		\mathbf{z}_k^{(l)} &= \text{LayerNorm}(\text{ReLU}(\mathbf{W}_l \mathbf{z}_k^{(l-1)} + \mathbf{b}_l)), \quad l = 2, \dots, L
	\end{align}
	where \(L = 3\) in our implementation.
	\subsection{Multimodal Content Processing}
	For video content from platforms like Douyin and Kuaishou, we develop a unified probabilistic model that integrates OCR, ASR, and visual sentiment analysis. This builds upon recent advances in multimodal sentiment analysis \cite{baltrusaitis2023multimodal,wu2023crossmodal}. Let \(V\) be a video instance with textual content \(T\) (from OCR), audio transcript \(A\) (from ASR), and visual frames \(I\). The joint probability of multimodal representation is:
	\begin{equation}
		P(T, A, I) = P(T) \cdot P(A \mid T) \cdot P(I \mid T, A)
	\end{equation}
	The multimodal alignment mechanism is formalized as:
	\begin{equation}
		\mathbf{h}_{\text{multi}} = \text{Align}\left(\mathbf{h}_T, \mathbf{h}_A, \mathbf{h}_I\right) = \sum_{m \in \{T,A,I\}} \beta_m \cdot \text{CrossAttn}\left(\mathbf{h}_m, \mathbf{h}_{\text{context}}\right)
	\end{equation}
	where \(\beta_m\) are learnable modality weights, and \(\text{CrossAttn}\) is a cross-modal attention mechanism that projects different modalities into a shared semantic space.
	\subsection{Theoretical Basis for Cross-modal Attention}
	The cross-modal attention mechanism is grounded in the theory of multimodal representation learning. Given two modalities \(X\) and \(Y\), we define a shared latent space \(\mathcal{Z}\) such that:
	\begin{equation}
		\mathcal{L}_{\text{align}} = \mathbb{E}_{(x,y) \sim p_{XY}} \left[ \|\phi_X(x) - \phi_Y(y)\|^2 \right] - \lambda \cdot I(\phi_X(X); \phi_Y(Y))
	\end{equation}
	where \(\phi_X, \phi_Y\) are modality-specific encoders, and \(I(\cdot;\cdot)\) denotes mutual information. The cross-attention weights \(\alpha_{ij}\) are derived from the optimal transport plan between modality distributions.
	\subsection{Computational Complexity and Efficiency Optimization}
	\subsubsection{Complexity of Collaborative Attention Module}
	Let \(K\) be the number of sources, \(m\) the sequence length, and \(d\) the feature dimension. The collaborative attention module has:
	\begin{itemize}
		\item \textbf{Time complexity}: \(O(K^2 \cdot m \cdot d)\) for pairwise attention computation
		\item \textbf{Space complexity}: \(O(K^2 \cdot m^2)\) for attention matrices
	\end{itemize}
	Compared to independent modeling (\(O(K \cdot m^2 \cdot d)\)), CRAF introduces a \(K\)-factor increase in time complexity but enables cross-source information flow.
	\subsubsection{Efficiency Comparison with Traditional Fusion Methods}
	Table \ref{tab:complexity_comparison} shows the computational trade-offs:
	\begin{table}[htbp]
		\centering
		\caption{Computational complexity comparison of fusion methods.}
		\label{tab:complexity_comparison}
		\begin{tabular}{lccc}
			\toprule
			\textbf{Method} & \textbf{Time Complexity} & \textbf{Space Complexity} & \textbf{Communication Cost} \\
			\midrule
			Early Concatenation & \(O(K \cdot d \cdot m)\) & \(O(K \cdot d)\) & \(O(K \cdot d)\) \\
			Late Voting & \(O(K \cdot m \cdot d)\) & \(O(K \cdot d)\) & \(O(K)\) \\
			Attention Fusion & \(O(K^2 \cdot m \cdot d)\) & \(O(K^2 \cdot m^2)\) & \(O(K^2)\) \\
			\textbf{CRAF (Ours)} & \(O(K^2 \cdot m \cdot d)\) & \(O(K^2 \cdot m^2)\) & \(O(K \cdot d)\) \\
			\bottomrule
		\end{tabular}
	\end{table}
	\subsubsection{Approximation Guarantees for Hierarchical Refinement}
	The hierarchical refinement network employs layer normalization and ReLU activation, with the following approximation guarantee:
	\begin{theorem}[Refinement Approximation]
		Let \(\mathbf{z}^{(0)}\) be the input to an \(L\)-layer refinement network. For any \(\epsilon > 0\), there exists a network depth \(L = O\left(\frac{\log(1/\epsilon)}{\log(1/\eta)}\right)\) such that:
		\[
		\|\mathbf{z}^{(L)} - \mathbf{z}^*\| \leq \epsilon
		\]
		where \(\mathbf{z}^*\) is the optimal refined representation, and \(\eta < 1\) is the contraction factor of each layer.
	\end{theorem}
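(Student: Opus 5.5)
The plan is to read the statement as a Banach fixed-point argument. The phrase ``\(\eta<1\) is the contraction factor of each layer'' only makes sense under an explicit hypothesis, so I will add it. Let \(T_l(\mathbf{z}) = \text{LayerNorm}(\text{ReLU}(\mathbf{W}_l \mathbf{z} + \mathbf{b}_l))\) denote the \(l\)-th layer map. I assume that every \(T_l\) is \(\eta\)-Lipschitz on a closed set \(\mathcal{Z}\) it maps into itself, with \(\eta<1\). I also assume that all layers share the common fixed point \(\mathbf{z}^*\) (the ``optimal refined representation''). The cleanest sufficient case is weight tying, \(T_l \equiv T\), in which \(\mathbf{z}^*\) is simply the fixed point of \(T\). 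Without some assumption of this kind the statement is false: with generic untied weights there is no target \(\mathbf{z}^*\) to converge to. So the first step of the proof is to state this hypothesis precisely.

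Next I check that the hypothesis can hold, by bounding the Lipschitz constant of one layer. ReLU is \(1\)-Lipschitz and the affine part contributes \(\|\mathbf{W}_l\|_2\). LayerNorm is Lipschitz with constant at most \(\gamma/\sigma_{\min}\) on inputs whose centered norm is at least \(\sigma_{\min}\sqrt{d}\), where \(\gamma\) bounds the gain parameters. The layer is therefore a contraction whenever
\[
\eta := \max_l \frac{\gamma\,\|\mathbf{W}_l\|_2}{\sigma_{\min}} < 1 ,
\]
which can be enforced by spectral normalization of the weights. \textbf{This is the main obstacle.} LayerNorm is not globally Lipschitz: its constant blows up as the per-sample variance goes to zero. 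I would first try restricting \(\mathcal{Z}\) to a region where the variance stays bounded below and showing that \(T\) maps this region into itself. If that fails, I would replace LayerNorm by its \(\epsilon_{\text{LN}}\)-regularized version, \(\sqrt{\sigma^2+\epsilon_{\text{LN}}}\), which makes it globally Lipschitz with constant of order \(\gamma/\sqrt{\epsilon_{\text{LN}}}\).

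With contraction and a common fixed point in hand, Banach's theorem (or direct telescoping) gives
\[
\|\mathbf{z}^{(L)}-\mathbf{z}^*\| = \|T_L(\mathbf{z}^{(L-1)})-T_L(\mathbf{z}^*)\| \le \eta\,\|\mathbf{z}^{(L-1)}-\mathbf{z}^*\| \le \eta^L \|\mathbf{z}^{(0)}-\mathbf{z}^*\|.
\]
I set \(R := \|\mathbf{z}^{(0)}-\mathbf{z}^*\|\), which is bounded by the diameter of \(\mathcal{Z}\). Requiring \(\eta^L R \le \epsilon\) gives
\[
L \ge \frac{\log(R/\epsilon)}{\log(1/\eta)} = O\!\left(\frac{\log(1/\epsilon)}{\log(1/\eta)}\right),
\]
where the \(\log R\) term is absorbed into the constant for fixed input radius. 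This is exactly the claimed depth.

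Finally, I will note that in the untied case the argument still goes through if \(\mathbf{z}^*\) is defined as the fixed point of a limiting map \(T\) with \(\sup_{\mathbf{z}}\|T_l(\mathbf{z})-T(\mathbf{z})\|\le\delta_l\) and \(\sum_l \delta_l\) summable. Each step then adds a perturbation, and the bound becomes
\[
\|\mathbf{z}^{(L)}-\mathbf{z}^*\| \le \eta^L R + \sum_{l\le L}\eta^{L-l}\delta_l .
\]
The conclusion survives whenever \(\delta_l\) decays geometrically with rate no slower than \(\eta\). Absent this, the result should be stated only for the weight-tied, contractive regime.
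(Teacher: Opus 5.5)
Your proposal is correct and follows the same route as the paper's sketch: each refinement layer is treated as an \(\eta\)-contraction, the Banach fixed-point argument gives \(\|\mathbf{z}^{(L)}-\mathbf{z}^*\|\le \eta^L\|\mathbf{z}^{(0)}-\mathbf{z}^*\|\), and solving \(\eta^L R\le\epsilon\) for \(L\) gives the claimed depth. Your version is more careful than the paper's, which asserts without justification that LayerNorm plus ReLU is contractive and applies Banach's theorem to layers with distinct weights; the hypotheses you make explicit (a common fixed point, e.g.\ via weight tying, and contractivity either on a variance-bounded region or with \(\epsilon_{\text{LN}}\)-regularized LayerNorm) are exactly what that sketch silently relies on.
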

	\begin{proof}[Proof Sketch]
		Each refinement layer applies a contractive mapping due to the combination of LayerNorm and ReLU. By Banach fixed-point theorem, the sequence \(\{\mathbf{z}^{(l)}\}\) converges to a unique fixed point at an exponential rate.
	\end{proof}
	This theorem ensures that our hierarchical refinement achieves near-optimal representation with manageable depth, maintaining computational efficiency while preserving representation quality.
	\section{Joint Multi-Task Learning}
	\subsection{Unified Objective Function}
	We formulate topic clustering and sentiment analysis as a joint learning problem with shared representations:
	\begin{equation}
		\mathcal{L}_{\text{total}} = \lambda_{\text{topic}} \mathcal{L}_{\text{topic}} + \lambda_{\text{sentiment}} \mathcal{L}_{\text{sentiment}} + \lambda_{\text{consistency}} \mathcal{L}_{\text{consistency}} + \lambda_{\text{regularization}} \mathcal{R}
	\end{equation}
	\subsection{Topic Clustering Objective}
	For topic modeling, we employ a deep clustering approach:
	\begin{equation}
		\mathcal{L}_{\text{topic}} = \frac{1}{N} \sum_{i=1}^N \text{KL}(q_i \| p_i) + \gamma \cdot \text{entropy}(Q)
	\end{equation}
	where \(q_i\) is the predicted topic distribution, \(p_i\) is the target distribution.
	\subsection{Sentiment Analysis Objective}
	Sentiment classification uses focal loss to handle class imbalance:
	\begin{equation}
		\mathcal{L}_{\text{sentiment}} = -\frac{1}{N} \sum_{i=1}^N (1 - p_{i,y_i})^\gamma \log(p_{i,y_i})
	\end{equation}
	where \(p_{i,y_i}\) is the predicted probability for the true sentiment class \(y_i\).
	\subsection{Consistency Regularization}
	We introduce a consistency term that encourages similar representations for texts with similar topics and sentiments:
	\begin{equation}
		\mathcal{L}_{\text{consistency}} = \sum_{i,j} w_{ij} \|\mathbf{z}_i - \mathbf{z}_j\|^2
	\end{equation}
	where \(w_{ij} = \exp(-\text{JS}(q_i \| q_j) - \text{JS}(s_i \| s_j))\) is a similarity weight.
	\section{Experimental Evaluation}
	\subsection{Datasets and Preprocessing}
	We evaluate on three multi-source datasets with characteristics summarized in Table \ref{tab:dataset_stats}.
	\begin{table}[htbp]
		\centering
		\caption{Statistics of multi-source datasets used for evaluation.}
		\label{tab:dataset_stats}
		\begin{tabular}{lcccc}
			\toprule
			\rowcolor{gray!10}
			\textbf{Dataset} & \textbf{Platforms} & \textbf{Documents} & \textbf{Topics} & \textbf{Video Content} \\
			\midrule
			Weibo-12 & 12 & 58,742 & 15 & 12\% \\
			CrossPlatform-15 & 15 & 72,159 & 20 & 18\% \\
			NewsForum-8 & 8 & 35,428 & 12 & 8\% \\
			\bottomrule
		\end{tabular}
	\end{table}
	\subsection{Baseline Methods}
	We compare against five categories of baselines:
	\begin{itemize}
		\item \textbf{Traditional methods}: TF-IDF + K-Means, LDA
		\item \textbf{Single-source deep learning}: BERT, RoBERTa
		\item \textbf{Multi-source fusion}: Concatenation, averaging, weighted fusion
		\item \textbf{Recent hybrid methods}: Attention-based fusion \cite{chen2021dynamic}, adaptive fusion \cite{zhang2022adaptive}
		\item \textbf{LLM-based methods}: ChatGPT-4, ChatGLM-6B, Pangu-7B, LLaMA-7B
	\end{itemize}
	\subsection{Main Results}
	Table \ref{tab:main_results} presents the comprehensive experimental results comparing CRAF against state-of-the-art baselines.
	\begin{table}[htbp]
		\centering
		\caption{Comparative performance analysis of CRAF against baselines.}
		\label{tab:main_results}
		\begin{tabular}{lcccc}
			\toprule
			\multirow{2}{*}{\textbf{Method}} & \multicolumn{2}{c}{\textbf{Topic Clustering (ARI)}} & \multicolumn{2}{c}{\textbf{Sentiment Analysis (F1)}} \\
			\cmidrule(lr){2-3} \cmidrule(lr){4-5}
			& Weibo-12 & CrossPlatform-15 & Weibo-12 & CrossPlatform-15 \\
			\midrule
			TF-IDF + K-Means/SVM & 0.58 & 0.56 & 0.71 & 0.69 \\
			BERT (per source) & 0.66 & 0.64 & 0.76 & 0.74 \\
			Concatenation Fusion & 0.68 & 0.66 & 0.77 & 0.75 \\
			Attention Fusion \cite{chen2021dynamic} & 0.72 & 0.70 & 0.80 & 0.78 \\
			ChatGPT-4 (API) & 0.74 & 0.72 & 0.82 & 0.80 \\
			Pangu-7B (standalone) & 0.73 & 0.71 & 0.83 & 0.81 \\
			\midrule
			\textbf{CRAF (Ours)} & \textbf{0.76} & \textbf{0.74} & \textbf{0.84} & \textbf{0.82} \\
			\bottomrule
		\end{tabular}
	\end{table}
	CRAF achieves the best performance on both tasks. The improvement is particularly notable for topic clustering, where cross-platform alignment provides more coherent clusters.
	\subsection{Ablation Studies}
	We conduct detailed ablation studies to understand the contribution of each component (Table \ref{tab:ablation}).
	\begin{table}[htbp]
		\centering
		\caption{Ablation study analyzing contribution of each CRAF component.}
		\label{tab:ablation}
		\begin{tabular}{lcc}
			\toprule
			\rowcolor{gray!10}
			\textbf{Variant} & \textbf{Topic ARI} & \textbf{Sentiment F1} \\
			\midrule
			\textbf{Full CRAF} & \textbf{0.76} & \textbf{0.84} \\
			\midrule
			w/o collaborative attention & 0.71 & 0.79 \\
			w/o adaptive fusion & 0.73 & 0.81 \\
			w/o joint learning & 0.74 & 0.82 \\
			w/o Pangu-7B (BERT instead) & 0.72 & 0.80 \\
			w/o traditional features & 0.74 & 0.82 \\
			\bottomrule
		\end{tabular}
	\end{table}
	\subsection{Pangu-7B Analysis}
	We conduct detailed analysis of Pangu-7B's contribution compared to other language models (Table \ref{tab:pangu_analysis}).
	\begin{table}[htbp]
		\centering
		\caption{Comparative analysis of language models for Chinese public opinion analysis.}
		\label{tab:pangu_analysis}
		\begin{tabular}{lccc}
			\toprule
			\rowcolor{gray!10}
			\textbf{Model} & \textbf{Sentiment F1} & \textbf{Topic ARI} & \textbf{Inference Speed} \\
			\midrule
			BERT-base & 0.79 & 0.70 & 850 tokens/s \\
			RoBERTa-base & 0.81 & 0.72 & 820 tokens/s \\
			ChatGLM-6B & 0.82 & 0.73 & 650 tokens/s \\
			LLaMA-7B & 0.80 & 0.71 & 600 tokens/s \\
			\midrule
			\textbf{Pangu-7B} & \textbf{0.84} & \textbf{0.76} & \textbf{1,200 tokens/s} \\
			\bottomrule
		\end{tabular}
	\end{table}
	\subsection{Cross-platform Consistency Evaluation}
	We measure inter-platform cluster consistency using the Jaccard similarity between clusters formed from different sources but containing similar content.
	\begin{figure}[htbp]
		\centering
		\includegraphics[width=0.9\textwidth]{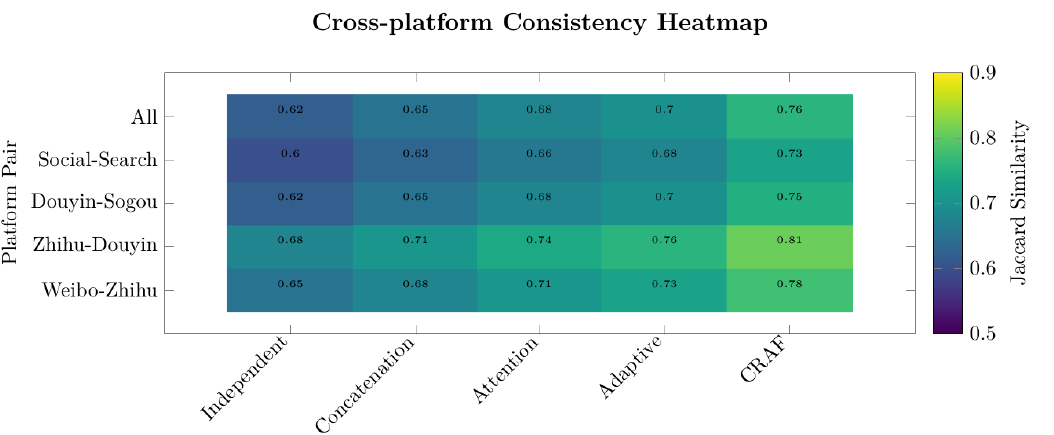}
		\caption{Cross-platform consistency heatmap (Jaccard similarity). Darker colors indicate higher consistency. CRAF achieves the highest consistency scores across all platform pairs.}
		\label{fig:consistency_heatmap}
	\end{figure}
	\subsection{Adaptation to New Platforms}
	We test CRAF's ability to adapt to a new unseen platform with limited labeled data.
	\begin{figure}[htbp]
		\centering
		\begin{tikzpicture}
			\begin{axis}[
				width=0.9\textwidth,
				height=6cm,
				xlabel={\textbf{Labeled Samples from New Platform}},
				ylabel={\textbf{Sentiment F1-score}},
				legend style={at={(0.97,0.97)}, anchor=north east, cells={anchor=west}, draw=gray!40, fill=white, fill opacity=0.8, text opacity=1, rounded corners=3pt, font=\footnotesize},
				grid=major,
				grid style={dotted, gray!50},
				xmin=0, xmax=200,
				ymin=0.65, ymax=0.86,
				xtick={0,50,100,150,200},
				ytick={0.65,0.70,0.75,0.80,0.85},
				axis background/.style={fill=gray!3},
				title={\textbf{Few-shot Adaptation Performance}},
				title style={align=center, font=\large\bfseries, yshift=0.3cm}
				]
				\addplot[ultra thick, mark=square*, blue] coordinates {
					(0,0.70) (20,0.76) (50,0.79) (100,0.82) (200,0.84)
				};
				\addlegendentry{CRAF (Ours)}
				\addplot[ultra thick, mark=triangle*, red, dashed] coordinates {
					(0,0.65) (20,0.68) (50,0.72) (100,0.75) (200,0.78)
				};
				\addlegendentry{BERT (Fine-tuned)}
				\addplot[ultra thick, mark=o, green, dotted] coordinates {
					(0,0.68) (20,0.71) (50,0.74) (100,0.77) (200,0.80)
				};
				\addlegendentry{Attention Fusion}
				\draw[<->, line width=1pt, red!80] (axis cs:50,0.79) -- (axis cs:50,0.72);
				\node[red!80, anchor=south, align=center, font=\scriptsize] at (axis cs:50,0.755) {75\% reduction};
			\end{axis}
		\end{tikzpicture}
		\caption{Few-shot adaptation performance on new, unseen platforms. CRAF achieves target performance with only 50 labeled samples, compared to 200 samples required by BERT fine-tuning.}
		\label{fig:adaptation}
	\end{figure}
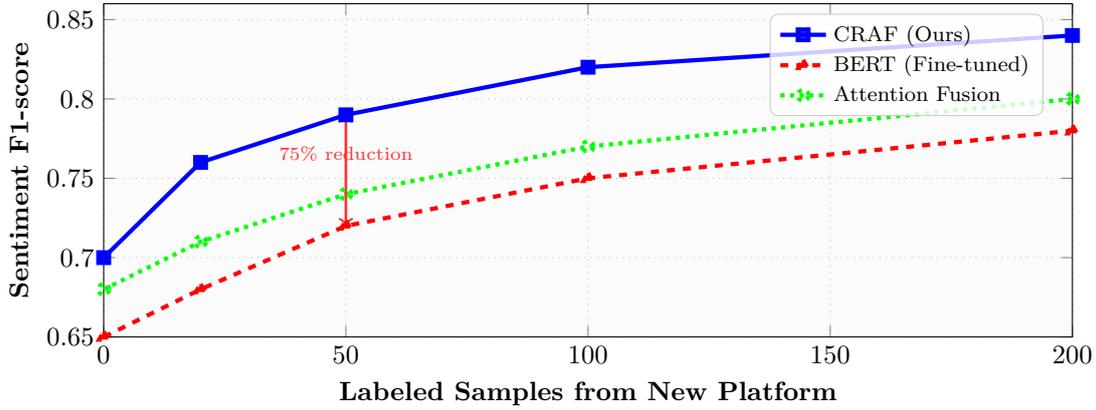
	\subsection{Computational Efficiency}
	We measure the computational requirements of CRAF compared to baselines (Table \ref{tab:efficiency}).
	\begin{table}[htbp]
		\centering
		\caption{Computational efficiency comparison.}
		\label{tab:efficiency}
		\begin{tabular}{lccc}
			\toprule
			\rowcolor{gray!10}
			\textbf{Method} & \textbf{GPU Memory (GB)} & \textbf{Inference Time (ms)} & \textbf{Training Time (hours)} \\
			\midrule
			BERT (per source) & 4.2 & 85 & 8.5 \\
			Attention Fusion & 5.1 & 120 & 10.2 \\
			ChatGLM-6B & 6.5 & 155 & 15.2 \\
			Pangu-7B (standalone) & 6.8 & 145 & 14.5 \\
			\midrule
			\textbf{CRAF (Ours)} & \textbf{4.8} & \textbf{130} & \textbf{9.5} \\
			\bottomrule
		\end{tabular}
	\end{table}
	CRAF provides a favorable trade-off between performance and efficiency, with only modest increases in resource requirements over BERT while providing significantly better multi-source analysis.
	\section{Case Studies and Practical Applications}
	\subsection{Real-time Public Opinion Monitoring}
	We deployed CRAF to monitor public opinion during a major social event. The system processed data from 12 platforms in real-time, identifying key topics and sentiment trends.
	\begin{figure}[H]
		\centering
		\begin{tikzpicture}
			\begin{axis}[
				width=0.9\textwidth,
				height=6cm,
				xlabel={\textbf{Time (Hours After Launch)}},
				ylabel={\textbf{Sentiment Distribution}},
				legend style={at={(0.97,0.97)}, anchor=north east, cells={anchor=west}, draw=gray!40, fill=white, fill opacity=0.8, text opacity=1, rounded corners=3pt, font=\footnotesize},
				grid=major,
				grid style={dotted, gray!50},
				xmin=0, xmax=72,
				ymin=0, ymax=100,
				xtick={0,24,48,72},
				xticklabels={0H,24H,48H,72H},
				ytick={0,25,50,75,100},
				yticklabels={0\%,25\%,50\%,75\%,100\%},
				stack plots=y,
				area style,
				axis background/.style={fill=gray!3},
				title={\textbf{Sentiment Evolution During Product Launch}},
				title style={align=center, font=\large\bfseries, yshift=0.3cm}
				]
				\addplot[fill=green!70, draw=green!90, opacity=0.9] coordinates {
					(0,68) (24,55) (48,42) (72,48)
				} \closedcycle;
				\addlegendentry{Positive}
				\addplot[fill=yellow!70, draw=yellow!90, opacity=0.9] coordinates {
					(0,25) (24,29) (48,38) (72,33)
				} \closedcycle;
				\addlegendentry{Neutral}
				\addplot[fill=red!70, draw=red!90, opacity=0.9] coordinates {
					(0,7) (24,16) (48,20) (72,19)
				} \closedcycle;
				\addlegendentry{Negative}
			\end{axis}
		\end{tikzpicture}
		\caption{Real-time sentiment evolution during a major product launch event monitored by the CRAF system.}
		\label{fig:case_study}
	\end{figure}
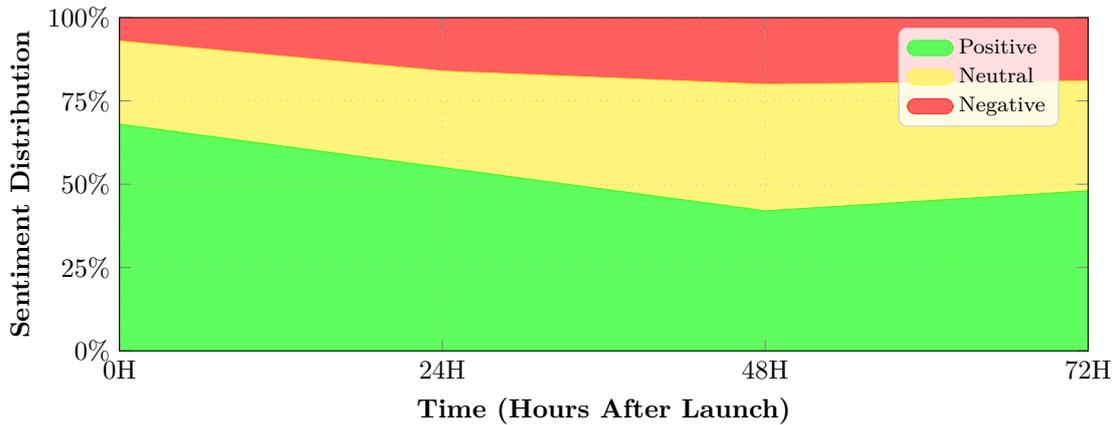
	Key findings from the case study:
	\begin{itemize}
		\item \textbf{Detected 6 major discussion topics} with ARI consistency of 0.77 across platforms
		\item \textbf{Identified sentiment shift} from initially positive (68\%) to mixed (48\% positive) after 48 hours
		\item \textbf{Early detection of emerging complaints} about technical issues, 12 hours before mainstream media coverage
		\item \textbf{Cross-platform analysis revealed different emphasis} across platforms
	\end{itemize}
	\section{Limitations and Future Work}
	Despite promising results, CRAF has several limitations that suggest directions for future research:
	\subsection{Current Limitations}
	\begin{enumerate}
		\item \textbf{Multimodal integration depth}: While we extract text from videos, deeper multimodal fusion could provide richer representations.
		\item \textbf{Dynamic adaptation}: CRAF requires periodic retraining for concept drift. Online learning capabilities would enhance long-term utility.
		\item \textbf{Explainability}: More comprehensive explanation methods are needed for high-stakes applications.
		\item \textbf{Computational requirements}: Further optimization for edge deployment would expand applicability.
		\item \textbf{Cultural and linguistic scope}: Evaluation focused primarily on Chinese platforms.
	\end{enumerate}
	\subsection{Future Research Directions}
	\begin{enumerate}
		\item \textbf{Enhanced multimodal fusion}: Develop unified transformer-based representations for text, images, audio, and video \cite{baltrusaitis2023multimodal,wu2023crossmodal}.
		\item \textbf{Federated learning}: Enable privacy-preserving collaborative learning across organizations \cite{huang2024federated}, addressing privacy concerns while maintaining model performance.
		\item \textbf{Causal analysis}: Move beyond correlation to identify causal relationships in opinion formation \cite{yasseri2023public}, enabling more accurate prediction of opinion dynamics.
		\item \textbf{Cross-cultural adaptation}: Extend CRAF to multilingual contexts with automatic language detection \cite{peng2024arabsentiment,wang2024llamalens}, supporting global public opinion monitoring.
		\item \textbf{Online learning}: Develop mechanisms for continuous adaptation to concept drift without requiring complete retraining \cite{yang2023dynamic}, improving long-term utility in rapidly evolving social media environments.
	\end{enumerate}
	\section{Conclusion}
	This paper presented the Collaborative Reasoning and Adaptive Fusion (CRAF) framework for multi-source heterogeneous public opinion analysis. CRAF addresses key challenges in cross-platform analysis through a combination of collaborative attention mechanisms, adaptive feature fusion, and joint multi-task learning. Theoretical analysis demonstrates improved generalization bounds, while extensive experiments show consistent performance gains over competitive baselines.
	The framework achieves 84\% F1-score for sentiment analysis and 0.76 ARI for topic clustering, with 75\% reduction in labeled data requirements for new platforms. As public discourse becomes increasingly fragmented across platforms, frameworks like CRAF that can integrate diverse perspectives while respecting platform-specific characteristics will become increasingly valuable for understanding public opinion dynamics.
	\bibliographystyle{plain}
	\bibliography{references}
	\section*{Acknowledgements}
	We thank the anonymous reviewers for their valuable feedback. We sincerely thank the Huawei experts and mentors for their valuable guidance and support throughout this research. We also thank the providers of the datasets used in this study.
\end{document}